\newtheorem{theorem}{Theorem}
\newtheorem{remark}{Remark}
\def\1{\bm{1}}
\def\vs{{\bm{s}}}
\DeclareMathAlphabet{\mathsfit}{\encodingdefault}{\sfdefault}{m}{sl}
\SetMathAlphabet{\mathsfit}{bold}{\encodingdefault}{\sfdefault}{bx}{n}
\renewcommand{\epsilon}{\varepsilon}
\newcommand{\be}{\mathbf{e}}
\newcommand{\bu}{\mathbf{u}}
\newcommand{\bx}{\mathbf{x}}
\newcommand{\by}{\mathbf{y}}
\newcommand{\bA}{\mathbf{A}}
\newcommand{\bD}{\mathbf{D}}
\newcommand{\bH}{\mathbf{H}}
\newcommand{\bI}{\mathbf{I}}
\newcommand{\bL}{\mathbf{L}}
\newcommand{\bR}{\mathbf{R}}
\newcommand{\bS}{\mathbf{S}}
\newcommand{\bT}{\mathbf{T}}
\newcommand{\bU}{\mathbf{U}}
\newcommand{\bW}{\mathbf{W}}
\newcommand{\bX}{\mathbf{X}}
\newcommand{\bY}{\mathbf{Y}}
\newcommand{\bvarepsilon}{\bm{\varepsilon}}
\newcommand{\bLambda}{\bm{\Lambda}}
\newcommand{\bTheta}{\bm{\Theta}}
\newcommand{\cG}{\mathcal{G}}
\newcommand{\cN}{\mathcal{N}}
\newcommand{\cV}{\mathcal{V}}
\newcommand{\cX}{\mathcal{X}}
\newcommand{\bbE}{\mathbb{E}}
\newcommand{\bbR}{\mathbb{R}}
\newcommand{\bzero}{\mathbf{0}}
\DeclareMathOperator*{\diag}{diag}
\def\ie{\textit{i.e.,~}}
\def\eg{\textit{e.g.,~}}
\def\etal{\textit{et al.~}}
\def\vs{\textit{v.s.~}}
\def\sota{state-of-the-art~}
\title{Dissecting the Diffusion Process in\\
Linear Graph Convolutional Networks}
\author{%
  Yifei Wang$^1$ \quad Yisen Wang$^2$\thanks{Corresponding author: Yisen Wang (yisen.wang@pku.edu.cn).} \quad Jiansheng Yang$^1$ \quad Zhouchen Lin$^{2,3}$ \\
   $^1$ School of Mathematical Sciences, Peking University, China \\
   $^2$ Key Lab. of Machine Perception, School of EECS, Peking Univesity, China \\
   $^3$ Pazhou Lab, Guangzhou 510330, China \\
   \texttt{yifei\_wang@pku.edu.cn, yisen.wang@pku.edu.cn} \\ \texttt{yjs@math.pku.edu.cn, zlin@pku.edu.cn}
}
\begin{document}
\maketitle

\begin{abstract}
Graph Convolutional Networks (GCNs) have attracted more and more attentions in recent years. A typical GCN layer consists of a linear feature propagation step and a nonlinear transformation step. Recent works show that a linear GCN can achieve comparable performance to the original non-linear GCN while being much more computationally efficient. In this paper, we dissect the feature propagation steps of linear GCNs from a perspective of continuous graph diffusion, and analyze why linear GCNs fail to benefit from more propagation steps. Following that, we propose Decoupled Graph Convolution (DGC) that decouples the terminal time and the feature propagation steps, making it more flexible and capable of exploiting a very large number of feature propagation steps. Experiments demonstrate that our proposed DGC improves linear GCNs by a large margin and makes them competitive with many modern variants of non-linear GCNs. Code is available at \url{https://github.com/yifeiwang77/DGC}.

\end{abstract}

\section{Introduction}
Recently, Graph Convolutional Networks (GCNs) have successfully extended the powerful representation learning ability of modern Convolutional Neural Networks (CNNs) to the graph data \cite{kipf2016semi}. A graph convolutional layer typically consists of two stages: linear feature propagation and non-linear feature transformation. 
Simple Graph Convolution (SGC) \cite{wu2019simplifying} simplifies GCNs by removing the nonlinearities between GCN layers and collapsing the resulting function into a single linear transformation, which is followed by a single linear classification layer and then becomes a linear GCN. SGC can achieve comparable performance to canonical GCNs while being much more computationally efficient and using significantly fewer parameters. 
Thus, we mainly focus on linear GCNs in this paper.

Although being comparable to canonical GCNs, SGC 
still suffers from a similar issue as non-linear GCNs, that is, more (linear) feature propagation steps $K$ will degrade the performance catastrophically. This issue is widely characterized as the ``over-smoothing'' phenomenon. Namely, node features become smoothed out and indistinguishable after too many feature propagation steps \cite{li2018deeper}.

In this work, through a dissection of the diffusion process of linear GCNs, we characterize a fundamental limitation of SGC. Specifically,  we point out that its feature propagation step amounts to a very coarse finite difference with a fixed step size $\Delta t=1$, which results in a large numerical error. And because the step size is fixed, more feature propagation steps will inevitably lead to a large terminal time $T=K\cdot\Delta t\to\infty$ that over-smooths the node features.

To address these issues, we propose Decoupled Graph Convolution (DGC) by decoupling the terminal time $T$ and propagation steps $K$. In particular, we can flexibly choose a continuous terminal time $T$ for the optimal tradeoff between under-smoothing and over-smoothing, and then fix the terminal time while adopting more propagation steps $K$. In this way, different from SGC that over-smooths with more propagation steps, our proposed DGC can obtain a more fine-grained finite difference approximation with more propagation steps, which contributes to the final performance both theoretically and empirically. Extensive experiments show that DGC (as a linear GCN) improves over SGC significantly and obtains state-of-the-art results that are comparable to many modern non-linear GCNs. Our main contributions are summarized as follows:
\begin{itemize}
    \item We investigate SGC by dissecting its diffusion process from a continuous perspective, and characterize why it cannot benefit from more propagation steps. 
    \item We propose Decoupled Graph Convolution (DGC) that decouples the terminal time $T$ and the propagation steps $K$, which enables us to choose a continuous terminal time flexibly while benefiting from more propagation steps from both theoretical and empirical aspects. 
    \item Experiments show that DGC outperforms canonical GCNs significantly and obtains state-of-the-art (SOTA) results among linear GCNs, which is even comparable to many competitive non-linear GCNs. We think DGC can serve as a strong baseline for the future research. 
\end{itemize}

\section{Related Work}

\textbf{Graph convolutional networks (GCNs).} To deal with non-Euclidean graph data, GCNs are proposed for direct convolution operation over graph, and have drawn interests from various domains. GCN is firstly introduced for a spectral perspective \cite{yang2016revisiting,kipf2016semi}, but soon it becomes popular as a general message passing algorithm in the spatial domain. Many variants have been proposed to improve its performance, such as GraphSAGE \cite{hamilton2017inductive} with LSTM and GAT with attention mechanism \cite{velivckovic2018graph}. 

\textbf{Over-smoothing issue.}
GCNs face a fundamental problem compared to standard CNNs, \textit{i.e.}, the over-smoothing problem. Li \etal\cite{li2018deeper} offer a theoretical characterization of over-smoothing based on linear feature propagation. After that, many researchers have tried to incorporate effective mechanisms in CNNs to alleviate over-smoothing. DeepGCNs \cite{li2019deepgcns} shows that residual connection and dilated convolution can make GCNs go as deep as CNNs, although increased depth does not contribute much. Methods like APPNP \cite{klicpera2018predict} and JKNet \cite{xu2018representation} avoid over-smoothing by aggregating multi-scale information from the first hidden layer. DropEdge \cite{rong2019dropedge}  applies dropout to graph edges and find it enables training GCNs with more layers. PairNorm \cite{zhao2019pairnorm} regularizes the feature distance to be close to the input distance, which will not fail catastrophically but still decrease with more layers. 

\textbf{Continuous GCNs.} Deep CNNs have been widely interpreted from a continuous perspective, \eg ResNet \cite{he2016deep} as the Euler discretization of Neural ODEs \cite{lu18d,chen2018neural}. This viewpoint has recently been borrowed to understand and improve GCNs. GCDE \cite{poli2019graph} directly extends GCNs to a Neural ODE, while CGNN \cite{xhonneux2020continuous} devises a GCN variant inspired by a new continuous diffusion. Our method is also inspired by the connection between discrete and continuous graph diffusion, but alternatively, we focus on their numerical gap and characterize how it affects the final performance. 

{
\textbf{Linear GCNs.} SGC \cite{wu2019simplifying} simplifies and separates the two stages of GCNs: feature propagation and (non-linear) feature transformation. It finds that utilizing only a simple logistic regression after feature propagation (removing the non-linearities), which makes it a linear GCN, can obtain comparable performance to canonical GCNs.  In this paper, we further show that a properly designed linear GCN (DGC) can be on-par with state-of-the-art non-linear GCNs while possessing many desirable properties. For example, as a linear model, DGC requires much fewer parameters than non-linear GCNs, which makes it very memory efficient, and meanwhile, its training is also much faster ($\sim100\times$) than non-linear models as it could preprocess all features before training.
}

\section{Dissecting Linear GCNs from Continuous Dynamics}

In this section, we make a brief review of SGC \cite{wu2019simplifying} in the context of semi-supervised node classification task, and further point out its fundamental limitations. 

\subsection{Review of Simple Graph Convolution (SGC)}

Define a graph as $\cG=(\cV,\bA)$, where $\cV=\{v_1,\dots,v_n\}$ denotes the vertex set of $n$ nodes, and $\bA\in\bbR^{n\times n}$ is an adjacency matrix where $a_{ij}$ denotes the edge weight between node $v_i$ and $v_j$. The degree matrix $\bD=\diag(d_1,\dots,d_n)$  of $\bA$ is a diagonal matrix with its $i$-th diagonal entry as $d_i=\sum_{j}a_{ij}$. Each node $v_i$ is represented by a $d$-dimensional feature vector $\bx_i\in\bbR^d$, and we denote the feature matrix as $\bX\in\bbR^{n\times d}=[\bx_1,\dots,\bx_n]$. Each node belongs to one out of $C$ classes, denoted by a one-hot vector $\by_i\in\{0,1\}^C$. In node classification problems, only a subset of nodes $\cV_l\subset \cV$ are labeled and we want to predict the labels of the rest nodes $\cV_u=\cV\backslash\cV_l$.

SGC shows that we can obtain similar performance with a simplified GCN,
\begin{equation}
\hat{\mathbf{Y}}_{\mathrm{SGC}}=\operatorname{softmax}\left(\mathbf{S} ^{K} \mathbf{X} \boldsymbol{\Theta}\right),
\label{eqn:sgc-whole}
\end{equation}
which pre-processes the node features $\bX$ with $K$ linear propagation steps, and then applies a linear classifier with parameter $\bTheta$. Specifically, at the step $k$, each feature $\bx_i$ is computed by aggregating features in its local neighborhood,
which can be done in parallel over the whole graph for $K$ steps, 
\begin{equation}
    \mathbf{X}^{(k)} \leftarrow\bS \bX^{(k-1)}, \text{ where }     \mathbf{S} =\widetilde{\mathbf{D}}^{-\frac{1}{2}} \widetilde{\mathbf{A}} \widetilde{\mathbf{D}}^{-\frac{1}{2}}
    \quad\Longrightarrow\quad 
    \bX^{(K)}=\bS^{K} \bX.   
    \label{eqn:sgc-propagation}
\end{equation}
Here $\widetilde{\mathbf{A}}=\mathbf{A}+\mathbf{I}$ is the adjacency matrix augmented with the self-loop $\bI$, $\widetilde{\mathbf{D}}$ is the degree matrix of $\widetilde{\mathbf{A}}$, and $\mathbf{S} $ denotes the symmetrically normalized adjacency matrix. 
This step exploits the local graph structure to smooth out the noise in each node. 

At last, SGC applies a multinomial logistic regression (\emph{a.k.a.}~softmax regression) with parameter $\bTheta$ to predict the node labels $\hat\bY_{\rm SGC}$ from the node features of the last propagation step $\bX^{(K)}$:
\begin{equation}
    \hat{\mathbf{Y}}_{\mathrm{SGC}}=\operatorname{softmax}\left(\bX^{(K)}\mathbf{\Theta}\right).
\end{equation}
Because both the feature propagation $(\bS^{K}_{\rm{}}\bX)$ and classification ($\bX^{(K)}\bTheta$) steps are linear, SGC is essentially a linear version of GCN that only relies on linear features from the input.

\subsection{Equivalence between SGC and Graph Heat Equation}
\label{sec:dissection}

Previous analysis of linear GCNs focuses on their asymptotic behavior as propagation steps $K\to\infty$ (discrete), known as the over-smoothing phenomenon \cite{li2018deeper}. In this work, we instead provide a novel \emph{non-asymptotic} characterization of linear GCNs from the corresponding \emph{continuous dynamics}, graph heat equation \cite{chung1997spectral}. 
A key insight is that we notice that the propagation of SGC can be seen equivalently as a (coarse) numerical discretization of the graph diffusion equation, as we show below.

Graph Heat Equation (GHE) is a well-known generalization of the heat equation on graph data, which is widely used to model graph dynamics with applications in spectral graph theory \cite{chung1997spectral}, time series \cite{medvedev2012stochastic}, combinational problems \cite{medvedev2014nonlinear}, \emph{etc}. In general, GHE can be formulated as follows:
\begin{equation}
\begin{cases}
\frac{d\bX_t}{dt}&=-\bL \bX_t,\, \\
\bX_0&=\bX,
\end{cases}
\label{eqn:continuous-diffusion-sa}
\end{equation}
where $\bX_t\ (t\geq0)$ refers to the evolved input features at time $t$, and $\bL$ refers to the graph Laplacian matrix. Here, for the brevity of analysis, we take the symmetrically normalized graph Laplacian for the augmented adjacency $\widetilde\bA$ and overload the notation as $\bL=\widetilde{\mathbf{D}}^{-\frac{1}{2}} \left(\widetilde{\bD}-\widetilde{\mathbf{A}}\right) \widetilde{\mathbf{D}}^{-\frac{1}{2}}=\bI-\bS$. 

As GHE is a continuous dynamics, in practice we need to rely on numerical methods to solve it. We find that SGC can be seen as a coarse finite difference of GHE. Specifically, we apply the forward Euler method to Eq.~\eqref{eqn:continuous-diffusion-sa} with an interval $\Delta t$:
\begin{equation}
\begin{aligned}
\hat\bX_{t+\Delta t}=&\hat\bX_t - \Delta t {\bL}\hat\bX_t
=\hat\bX_t - \Delta t(\bI-\bS )\hat\bX_t 
=\left[(1-\Delta t)\bI+ \Delta t\bS \right]\hat\bX_t. 
\end{aligned}
\label{eqn:forward-euler}
\end{equation}
By involving the update rule for $K$ forward steps, we will get the final features $\hat\bX_T$ at the terminal time $T=K\cdot\Delta t$:
\begin{equation}
    \hat\bX_T=[\bS^{(\Delta t)} ]^K\bX, \text{ where } \bS^{(\Delta t)}=(1-\Delta t)\bI+ \Delta t\bS.
\end{equation}
Comparing to Eq.~\eqref{eqn:sgc-propagation}, we can see that the Euler discretization of GHE becomes SGC when the step size $\Delta t=1$. Specifically, the diffusion matrix $\bS^{(\Delta t)}$ reduces to the SGC diffusion matrix $\bS $ and the final node features, $\hat\bX_{T}$ and $\bX^{(K)}$,  become equivalent.
Therefore, SGC with $K$ propagation steps is essentially a finite difference approximation to GHE with $K$ forward steps (step size $\Delta t=1$ and terminal time $T=K$).

\subsection{Revealing the Fundamental Limitations of SGC}
\label{sec:limitations}

Based on the above analysis, we theoretically characterize several fundamental limitations of SGC: feature over-smoothing, large numerical errors and large learning risks. Proofs are in Appendix \ref{sec:appendix-proof}.

\begin{theorem}[Oversmoothing from a spectral view]
Assume that the eigendecomposition of the Laplacian matrix as $\bL =\sum_{i=1}^n\lambda_i\bu_i\bu_i^\top$, with eigenvalues $\lambda_i$ and eigenvectors $\bu_i$.
Then, the heat equation (Eq.~\eqref{eqn:continuous-diffusion-sa}) admits a closed-form solution at time $t$, known as the heat kernel $\bH_t=e^{-t\bL}=\sum_{i=1}^n e^{-\lambda_i t}\bu_i\bu_i^\top$. 
As $t\to\infty$, $\bH_t$ asymptotically converges to a non-informative equilibrium as $t\to\infty$, due to the non-trivial (i.e., positive) eigenvalues vanishing:
\begin{equation}
\lim_{t\to\infty}e^{-\lambda_it}=
\begin{cases}
0, &\text{ if } \lambda_i>0\\
1, &\text{ if } \lambda_i=0
\end{cases},\
i=1,\dots,n.
\end{equation}
\label{thm:aymptotic}
\end{theorem}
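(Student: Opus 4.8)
The plan is to proceed in three elementary steps: solve the linear matrix ODE explicitly, diagonalize the resulting matrix exponential through the spectral decomposition of $\bL$, and then read off the limit coefficient-by-coefficient in the eigenbasis.

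First I would verify that the heat kernel $\bH_t=e^{-t\bL}$ yields the claimed solution of Eq.~\eqref{eqn:continuous-diffusion-sa}. Since this is a linear first-order matrix ODE with constant coefficient $-\bL$, its unique solution subject to $\bX_0=\bX$ is $\bX_t=e^{-t\bL}\bX$; uniqueness follows from standard ODE theory (Picard--Lindel\"of), as the right-hand side is globally Lipschitz in $\bX_t$. Differentiating the matrix exponential gives $\frac{d}{dt}e^{-t\bL}=-\bL e^{-t\bL}$, and $\bH_0=\bI$ recovers the initial condition, confirming that $\bH_t$ solves the system.

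Next I would convert $e^{-t\bL}$ into its spectral form. Because $\bL$ is real symmetric, the spectral theorem supplies an orthonormal eigenbasis $\{\bu_i\}$ with $\bu_i^\top\bu_j=\delta_{ij}$ and $\bL=\sum_i\lambda_i\bu_i\bu_i^\top$. Orthonormality yields $\bL^k=\sum_i\lambda_i^k\bu_i\bu_i^\top$ for every $k\ge 0$, so substituting into the absolutely convergent Taylor series $e^{-t\bL}=\sum_{k\ge 0}\frac{(-t)^k}{k!}\bL^k$ and interchanging the two summations produces the stated closed form $\bH_t=\sum_i e^{-\lambda_i t}\bu_i\bu_i^\top$.

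Finally, the limit reduces to the scalar behavior of $e^{-\lambda_i t}$ in this eigenbasis. The one fact that must be invoked is that $\bL=\bI-\bS$ is positive semidefinite (its spectrum lies in $[0,2]$), so every $\lambda_i\ge 0$; this is precisely what guarantees the limit exists rather than diverging. Then $e^{-\lambda_i t}\to 0$ for $\lambda_i>0$ and $e^{-\lambda_i t}\equiv 1$ for $\lambda_i=0$, giving the claimed dichotomy and showing $\bH_t\to\sum_{i:\lambda_i=0}\bu_i\bu_i^\top$, the orthogonal projection onto $\ker\bL$ — an equilibrium independent of the input features. There is no genuine obstacle here; the result is standard, and the only point requiring any care is the nonnegativity of the spectrum, which is what underpins convergence.
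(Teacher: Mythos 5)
Your proposal is correct and follows essentially the same route as the paper's proof: expand $e^{-t\bL}$ via its Taylor series, push the eigendecomposition $\bL=\bU\bLambda\bU^\top$ through each power to obtain $\bH_t=\bU e^{-t\bLambda}\bU^\top$, and read off the scalar limits $e^{-\lambda_i t}$ in the spectral domain. The extra details you supply — uniqueness of the ODE solution via Picard--Lindel\"of, the explicit observation that $\bL=\bI-\bS$ is positive semidefinite so the limit exists, and the identification of the equilibrium as the projector onto $\ker\bL$ — are all correct and only make the argument more complete than the paper's version.
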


\begin{remark}\normalfont
In SGC, $T=K\cdot\Delta t=K$. Thus according to Theorem \ref{thm:aymptotic}, a large number of propagation steps $K\to\infty$ will inevitably lead to over-smoothed non-informative features.
\label{rmk:asymptotic}
\end{remark}

\begin{theorem}[Numerical errors]
\label{them2}
For the initial value problem in Eq.~\eqref{eqn:continuous-diffusion-sa} with finite terminal time $T$, the numerical error of the forward Euler method in Eq.~\eqref{eqn:forward-euler} with $K$ steps can be upper bounded by
\begin{equation}
\left\Vert \be^{(K)}_T\right\Vert\leq \frac{T\Vert\bL\Vert\Vert\bX_0\Vert}{2K}\left(e^{T\Vert\bL\Vert}-1\right).
\label{eqn:numerical-error}
\end{equation}
\label{thm:numeircal}
\end{theorem}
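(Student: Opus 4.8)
The plan is to treat Eq.~\eqref{eqn:continuous-diffusion-sa} as a linear constant-coefficient ODE and run the classical global-error analysis for the forward Euler method, specializing every generic constant (the Lipschitz constant and the second-derivative bound) to the graph Laplacian $\bL$. First I would invoke the closed-form solution from Theorem~\ref{thm:aymptotic}, $\bX_t=e^{-t\bL}\bX_0$, and differentiate twice to get $\ddot\bX_t=\bL^2 e^{-t\bL}\bX_0$. Since $\bL=\bI-\bS$ is symmetric positive semidefinite, its eigenvalues are nonnegative, so $\Vert e^{-t\bL}\Vert\le 1$ in the spectral norm for every $t\ge 0$; hence $\Vert\ddot\bX_t\Vert\le\Vert\bL\Vert^2\Vert\bX_0\Vert =: M$ uniformly on $[0,T]$. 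This $M$ is the quantity that will eventually produce the prefactor in the claimed bound.

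Next I would set up a recursion for the global error $\be^{(k)}=\bX_{t_k}-\hat\bX_{t_k}$ at the grid points $t_k=k\Delta t$, with $\be^{(0)}=\bzero$ and $\Delta t=T/K$. Writing the exact solution's one-step update through the integral (Taylor) remainder, $\bX_{t_{k+1}}=\bX_{t_k}-\Delta t\,\bL\bX_{t_k}+\btau_k$ with $\btau_k=\int_{t_k}^{t_{k+1}}(t_{k+1}-s)\ddot\bX_s\,ds$, and subtracting the Euler update $\hat\bX_{t_{k+1}}=(\bI-\Delta t\bL)\hat\bX_{t_k}$, gives $\be^{(k+1)}=(\bI-\Delta t\bL)\be^{(k)}+\btau_k$. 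Using the integral form of the remainder (rather than a mean-value form) is the clean way to handle the matrix-valued second derivative, and it yields $\Vert\btau_k\Vert\le M\,\Delta t^2/2$. Combined with $\Vert\bI-\Delta t\bL\Vert\le 1+\Delta t\Vert\bL\Vert$ from the triangle inequality, this produces the scalar recurrence $\Vert\be^{(k+1)}\Vert\le(1+\Delta t\Vert\bL\Vert)\Vert\be^{(k)}\Vert+\tfrac{1}{2}M\Delta t^2$.

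Finally I would solve this linear recurrence. With $\be^{(0)}=\bzero$, unrolling gives $\Vert\be^{(K)}_T\Vert\le\frac{M\Delta t^2}{2}\sum_{j=0}^{K-1}(1+\Delta t\Vert\bL\Vert)^j=\frac{M\Delta t^2}{2}\cdot\frac{(1+\Delta t\Vert\bL\Vert)^K-1}{\Delta t\Vert\bL\Vert}$. Bounding $(1+\Delta t\Vert\bL\Vert)^K\le e^{K\Delta t\Vert\bL\Vert}=e^{T\Vert\bL\Vert}$, substituting $M=\Vert\bL\Vert^2\Vert\bX_0\Vert$ and $\Delta t=T/K$, and cancelling one factor of $\Vert\bL\Vert$ reproduces exactly the stated bound $\frac{T\Vert\bL\Vert\Vert\bX_0\Vert}{2K}(e^{T\Vert\bL\Vert}-1)$. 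I expect the main obstacle to be bookkeeping rather than conceptual: handling the matrix-valued Taylor remainder rigorously (hence the integral form, which sidesteps the failure of the scalar mean-value theorem for vector-valued maps) and tracking the constants so that the $\Vert\bL\Vert$ coming from $M$ cancels correctly against the $\Vert\bL\Vert$ in the denominator of the geometric sum, leaving precisely the claimed prefactor.
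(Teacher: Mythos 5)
Your proposal is correct and follows essentially the same route as the paper's own proof: a one-step truncation-error bound of order $\Delta t^2$ for the exact flow, the error recursion $\be^{(k+1)}=(\bI-\Delta t\,\bL)\be^{(k)}+\btau_k$, the bound $\Vert e^{-t\bL}\Vert\le 1$ from positive semidefiniteness of $\bL$, and the geometric-sum/exponential estimate $(1+\Delta t\Vert\bL\Vert)^K\le e^{T\Vert\bL\Vert}$. Your use of the integral form of the Taylor remainder is a slightly cleaner way to handle the vector-valued second derivative than the paper's mean-value-style statement (which as printed even carries a $\tfrac{1}{2h}$ typo where $\tfrac{h}{2}$ is meant), but this is a cosmetic refinement rather than a different argument.
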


\begin{remark}\normalfont
Since $T=K$ in SGC, the upper bound reduces to $c\cdot\left(e^{T\Vert\bL\Vert}-1\right)$ ($c$ is a constant). We can see that the numerical error will increase exponentially with more propagation steps.
\label{rmk:numerical}
\end{remark}

\begin{theorem}[Learning risks] Consider a simple linear regression problem $(\bX,\bY)$ on graph, where the observed input features $\bX$ are generated by corrupting the ground truth features $\bX_c$ with the following inverse graph diffusion with time $T^*:$
\begin{equation}
\frac{d\widetilde{\bX}_t}{dt}=\bL\widetilde\bX_t, ~\text{ where }\widetilde\bX_0=\bX_c ~\text{ and }~ \widetilde\bX_{T^*}=\bX. 
\label{eqn:example-inverse-heat-equation}
\end{equation}
Denote the population risk with ground truth features as $R(\bW)=\bbE\left\Vert \bY-\bX_c\bW\right\Vert^2$ and that of Euler method applied input $\bX$ (Eq.~\eqref{eqn:forward-euler}) as $\hat R(\bW)=\bbE\left\Vert \bY-\left[\bS^{(\Delta t)}\right]^K\bX\bW\right\Vert^2.$
Supposing that $\bbE\Vert\bX_c\Vert^2=M<\infty$, we have the following upper bound:
\begin{align}
\hat R(\bW)< R(\bW)+2\Vert\bW\Vert^2\left(\bbE\left\Vert\be^{(K)}_{\hat T}\right\Vert^2+M\left\Vert e^{T^\star\bL}\right\Vert^2\cdot\left\Vert e^{-T^\star\bL}-e^{-\hat T\bL}\right\Vert^2\right).  
\label{eqn:learning-risk}
\end{align}
\label{thm:learning-risk}
\end{theorem}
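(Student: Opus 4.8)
The plan is to reduce the gap $\hat R(\bW)-R(\bW)$ to a single quantity — the discrepancy between the Euler-processed corrupted features $[\bS^{(\Delta t)}]^{K}\bX$ and the clean features $\bX_c$ — and then control that discrepancy with the numerical-error bound of Theorem~\ref{thm:numeircal} together with an operator-norm estimate for the terminal-time mismatch.

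First I would integrate the inverse diffusion \eqref{eqn:example-inverse-heat-equation} in closed form. Since $\widetilde\bX_t=e^{t\bL}\bX_c$ solves it with $\widetilde\bX_0=\bX_c$, the corruption is $\bX=\widetilde\bX_{T^*}=e^{T^*\bL}\bX_c$, equivalently $\bX_c=e^{-T^*\bL}\bX$. This identity is the bridge that rewrites the clean features as a backward heat kernel applied to the observed ones.

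Next I would insert the exact heat kernel at the Euler terminal time $\hat T=K\Delta t$ to split the feature error into two interpretable pieces:
\[
\bX_c-[\bS^{(\Delta t)}]^{K}\bX=\underbrace{\left(e^{-T^*\bL}-e^{-\hat T\bL}\right)\bX}_{\text{terminal-time mismatch}}+\underbrace{\left(e^{-\hat T\bL}\bX-[\bS^{(\Delta t)}]^{K}\bX\right)}_{=\,\be^{(K)}_{\hat T}}.
\]
The second term is exactly the global Euler error $\be^{(K)}_{\hat T}$ of \eqref{eqn:forward-euler} with initial data $\bX$, and the first is the error of stopping the diffusion at $\hat T$ instead of $T^*$. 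Writing $A=\bY-\bX_c\bW$ and $B=(\bX_c-[\bS^{(\Delta t)}]^{K}\bX)\bW$, I expand the risk as $\hat R(\bW)=\bbE\|A+B\|^2=R(\bW)+\bbE\|B\|^2+2\,\bbE\langle A,B\rangle$. To finish I bound $\bbE\|B\|^2\le\|\bW\|^2\,\bbE\|\bX_c-[\bS^{(\Delta t)}]^{K}\bX\|^2$ by submultiplicativity, apply $\|u+v\|^2\le 2\|u\|^2+2\|v\|^2$ to the two-term split above (this is where the factor $2\|\bW\|^2$ originates), and estimate the mismatch term via $\bX=e^{T^*\bL}\bX_c$ as $\bbE\|(e^{-T^*\bL}-e^{-\hat T\bL})\bX\|^2\le\|e^{-T^*\bL}-e^{-\hat T\bL}\|^2\|e^{T^*\bL}\|^2\,\bbE\|\bX_c\|^2=M\|e^{T^*\bL}\|^2\|e^{-T^*\bL}-e^{-\hat T\bL}\|^2$, using $\bbE\|\bX_c\|^2=M$.

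The main obstacle is the cross term $2\,\bbE\langle A,B\rangle$: a crude $\|u+v\|^2\le 2\|u\|^2+2\|v\|^2$ applied to the whole risk would leave a coefficient $2$ in front of $R(\bW)$, whereas the statement keeps it at $1$. Recovering coefficient $1$ requires arguing this term is non-positive — most naturally that, in the well-specified regression model, the ground-truth residual $A$ is conditionally mean-zero given the features (and $B$ is a deterministic function of those features, since $\bX=e^{T^*\bL}\bX_c$), so the cross term vanishes and the strict inequality then follows from the generically strict sub-additive split. Everything after that is routine operator-norm bookkeeping, where the only care needed is keeping the $e^{T^*\bL}$ factor intact through the expectation so that the $M$-bound closes.
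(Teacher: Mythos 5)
Your proposal follows essentially the same route as the paper's own proof: integrate the inverse diffusion to get $\bX_c=e^{-T^\star\bL}\bX$, split the feature discrepancy into the Euler error $\be^{(K)}_{\hat T}$ plus the terminal-time mismatch $\left(e^{-T^\star\bL}-e^{-\hat T\bL}\right)\bX$, and close with submultiplicativity, the inequality $\Vert u+v\Vert^2\leq 2\Vert u\Vert^2+2\Vert v\Vert^2$, and $\bbE\Vert\bX_c\Vert^2=M$. The one place you go beyond the paper is the cross term $2\,\bbE\langle A,B\rangle$, which the paper's derivation simply drops without comment; your instinct to justify it is right, but note that your argument (conditional mean-zero residual) only holds at $\bW=\bW_c$, since for general $\bW$ one has $\bbE[\bY-\bX_c\bW\mid\bX_c]=\bX_c(\bW_c-\bW)\neq\bzero$ under the generation model $\bY=\bX_c\bW_c+\sigma_y\bvarepsilon_y$ — a limitation shared by (and left implicit in) the paper's own proof.
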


\begin{remark}
\normalfont
Following Theorem \ref{thm:learning-risk}, we can see that the upper bound can be minimized by finding an optimal terminal time such that $\hat T=T^\star$ and minimizing the numerical error $\left\Vert\be^{(K)}_{\hat T}\right\Vert$. 
While SGC fixes the step size $\Delta t=1$, thus $T$ and $K$ are coupled together, which makes it less flexible to minimize the risk in Eq.~\eqref{eqn:learning-risk}.  
\label{rmk:risks}
\end{remark}

\section{The Proposed Decoupled Graph Convolution (DGC)}
\label{sec:method}
In this section, we introduce our proposed Decoupled Graph Convolution (DGC) and discuss how it overcomes the above limitations of SGC. 

\subsection{Formulation}

Based on the analysis in Section \ref{sec:limitations}, we need to resolve the coupling between propagation steps $K$ and terminal time $T$ caused by the fixed time interval $\Delta t=1$. Therefore, we regard the terminal time $T$ and the propagation steps $K$ as two \emph{free} hyper-parameters in the numerical integration via a flexible time interval. In this way, the two parameters can play different roles and cooperate together to attain better results: 1) we can flexibly choose $T$ to tradeoff between under-smoothing and over-smoothing to find a sweet spot for each dataset; and 2) given an optimal terminal time $T$, we can also flexibly increase the propagation steps $K$ for better numerical precision with $\Delta t=T/K\to0$. In practice, a moderate number of steps is sufficient to attain the best classification accuracy, hence we can also choose a minimal $K$ among the best for computation efficiency.

Formally, we propose our Decoupled Graph Convolution (DGC) as follows:
\begin{equation}
\hat{\mathbf{Y}}_{\mathrm{DGC}}=\operatorname{softmax}\left(\hat\bX_T\boldsymbol{\Theta}\right), \text{ where } \hat\bX_T=\operatorname{ode\_int}(\bX, \Delta t, K).
\end{equation}
Here $\operatorname{ode\_int}(\bX,\Delta t,K)$ refers to the numerical integration of the graph heat equation that starts from $\bX$ and runs for $K$ steps with step size $\Delta t$ . Here, we consider two numerical schemes: the forward Euler method and the Runge-Kutta (RK) method. 

\textbf{DGC-Euler}. As discussed previously, the forward Euler gives an update rule as in Eq.~\eqref{eqn:forward-euler}. With terminal time $T$ and step size $\Delta t=T/K$, we can obtain $\hat \bX_T$ after $K$ propagation steps:
\begin{equation}
    \hat\bX_T=\left[\bS^{(T/K)} \right]^K\bX, \text{ where }
    \bS^{(T/K)} =(1-T/K)\cdot\bI+ (T/K) \cdot\bS.
    \label{eqn:euler-feature-propagation}
\end{equation}

\textbf{DGC-RK.} Alternatively, we can apply higher-order finite difference methods to achieve better numerical precision, at the cost of more function evaluations at intermediate points. One classical method is the 4th-order Runge-Kutta (RK) method, which proceeds with
\begin{equation}
\hat\bX_{t+\Delta t}=\hat\bX_t+\frac{1}{6}\Delta t \left(\bR_{1}+2\bR_{2}+2 \bR_{3}+\bR_{4}\right)\overset{\Delta}{=}\bS^{(\Delta t)}_{\rm RK}\hat\bX_t,
\end{equation}
where 
\begin{equation}
\begin{aligned}
\bR_{1}=\hat\bX_k,~
\bR_{2}=\hat\bX_{k}-\frac{1}{2}\Delta t\bL\bR_1,~
\bR_{3}=\hat\bX_{k}-\frac{1}{2}\Delta t\bL\bR_2,~
\bR_{4}=\hat\bX_{k}-\Delta t\bL\bR_3.
\end{aligned}
\end{equation}
Replacing the propagation matrix $\bS^{(T/K)}$ in DGC-Euler with the RK-matrix $\bS^{(T/K)}_{\rm RK}$, we can get a 4th-order model, namely DGC-RK, whose numerical error can be  reduced to $O(1/K^4)$ order. 

\textbf{Remark.} 
In GCN \cite{kipf2016semi}, a self-loop $\bI$ is heuristically introduced in the adjacency matrix $\widetilde{\bA}=\bA+\bI$ to prevent numerical instability with more steps $K$. Here, we notice that the DGC-Euler diffusion matrix $\bS^{(\Delta t)}  =(1-\Delta t)\bI + \Delta t\bS$ naturally incorporates the self-loop $\bI$ into the diffusion process as a momentum term, where  $\Delta t$ flexibly tradeoffs information from the self-loop and the neighborhood. 
Therefore, in DGC, we can also remove the self-loop from $\widetilde\bA$ and increasing $K$ is still numerically stable with fixed $T$. 
We name the resulting model as DGC-sym with symmetrically normalized adjacency matrix $\bS_{\rm sym}=\bD^{-\frac{1}{2}}\bA\bD^{-\frac{1}{2}}$, which aligns with the canonical normalized graph Laplacian $\bL_{\rm sym}={\mathbf{D}}^{-\frac{1}{2}} \left({\bD}-{\mathbf{A}}\right) {\mathbf{D}}^{-\frac{1}{2}}=\bI-\bS_{\rm sym}$ in the spectral graph theory \cite{chung1997spectral}. Comparing the two Laplacians from a spectral perspective, $\bL=\bI-\bS$ has a smaller spectral range than $\bL_{\rm sym}$ \cite{wu2019simplifying}. According to Theorem \ref{thm:numeircal}, $\bL$ will have a faster convergence rate of numerical error.

\begin{figure}
    \centering
    \includegraphics[width=\textwidth]{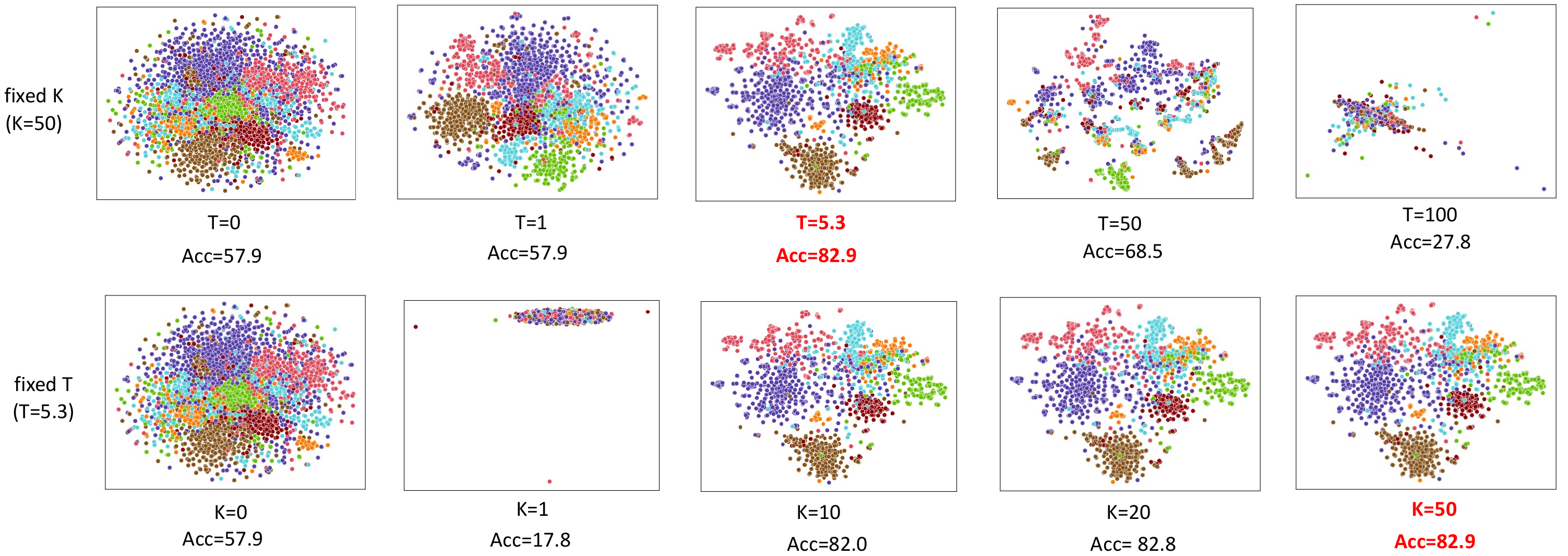}
    \caption{t-SNE input feature visualization and the corresponding test accuracy (\%) under different terminal time ($T$) and different number of propagation steps ($K$). Experiments are conducted with ours DGC-Euler model on the Cora dataset. Each point represents a node in the graph and its color denotes the class of the node.}
    \label{fig:t-SNE}
\end{figure}

\subsection{Verifying the Benefits of DGC}
\label{sec:understanding-DGC}

Here we demonstrate the advantages of DGC both theoretically and empirically.

\textbf{Theoretical benefits.} Revisiting Section \ref{sec:limitations}, DGC can easily alleviate the limitations of existing linear GCNs shown in Remarks \ref{rmk:asymptotic}, \ref{rmk:numerical}, \ref{rmk:risks}
by decoupling $T$ and $K$. 
\begin{itemize}
    \item For \textbf{Theorem \ref{thm:aymptotic}}, by choosing a fixed terminal time $T$ with optimal tradeoff, increasing the propagation steps $K$ in DGC will not lead to over-smoothing as in SGC;
    \item For \textbf{Theorem \ref{thm:numeircal}}, with $T$ is fixed, using more propagation steps ($K\to\infty$) in DGC  will help minimize the numerical error $\left\Vert\be^{(K)}_T\right\Vert$ with a smaller step size $\Delta t=T/K\to 0$;
    \item For \textbf{Theorem \ref{thm:learning-risk}}, by combining a flexibly chosen optimal terminal time $T^*$ and minimal numerical error with a large number of steps $K$, we can get minimal learning risks.
\end{itemize}

\textbf{Empirical evidence.} To further provide an intuitive understanding of DGC, we visualize the propagated input features of our proposed DGC-Euler on the Cora dataset in Figure \ref{fig:t-SNE}. The first row shows that there exists an optimal terminal time $T^*$ for each dataset with the best feature separability (\textit{e.g.}, 5.3 for Cora). Either a smaller $T$ (under-smooth) or a larger $T$ (over-smooth) will mix the features up and make them more indistinguishable, which eventually leads to lower accuracy. From the second row, we can see that, with fixed optimal $T$, too large step size $\Delta t$ (\ie too small propagation steps $K$) will lead to feature collapse, while gradually increasing the propagation steps $K$ makes the nodes of different classes more separable and improve the overall accuracy.

\begin{table}[t]
    \centering
    \caption{A comparison of propagation rules. Here $\bX^{(k)}\in\cX$ represents input features after $k$ feature propagation steps and $\bX^{(0)}=\bX$; $\bH^{(k)}$ denotes the hidden features of non-linear GCNs at layer $k$; $\bW$ denotes the weight matrix; $\sigma$ refers to a activation function; $\alpha,\beta$ are coefficients.}
    \begin{tabular}{lcc}
    \toprule
    Method & Type & Propagation rule \\
    \midrule
    
    GCN \cite{kipf2016semi}   & Non-linear & $\bH^{(k)}=\sigma\left(\bS\bH^{(k-1)}\bW^{(k-1)}\right)$ \\
    APPNP \cite{klicpera2018predict} & Non-linear & $\bH^{(k)}=(1-\alpha)\bS\bH^{(k-1)}+\alpha\bH^{(0)}$  \\
    CGNN \cite{xhonneux2020continuous}  & Non-linear & $\bH^{(k)}=(1-\alpha)\bS\bH^{(k-1)}\bW+\bH^{(0)}$  \\
    \midrule
    SGC \cite{wu2019simplifying}    & Linear & $\bX^{(k)}= \bS\bX^{(k-1)})$  \\
    
    \textbf{DGC-Euler} (ours) & Linear & $\bX^{(k)}= (1-T/K)\cdot\bX^{(k-1)} + (T/K)\cdot\bS\bX^{(k-1)}$  \\
    \bottomrule
    \end{tabular}
    \label{tab:propagation-rule-comparison}
\end{table}

\subsection{Discussions}

To highlight the difference of DGC to previous methods, we summarize their propagation rules in Table \ref{tab:propagation-rule-comparison}.
For non-linear methods, GCN \cite{kipf2016semi} uses the canonical propagation rule which has the oversmoothing issue, while APPNP \cite{klicpera2018predict} and CGNN \cite{xhonneux2020continuous} address it by further aggregating the initial hidden state $\bH^{(0)}$ repeatedly at each step. In particular, we emphasize that our DGC-Euler is different from APPNP in terms of the following aspects: 1) DGC-Euler is a linear model and propagates on the input features $\bX^{(k-1)}$, while APPNP is non-linear and propagates on non-linear embedding $\bH^{(k-1)}$; 2) at each step, APPNP aggregates features from the \emph{initial} step $\bH^{(0)}$, while DGC-Euler aggregates features from the \emph{last} step $\bX^{(k-1)}$; 3) APPNP aggregates a large amount $(1-\alpha)$ of the propagated features $\bS\bH^{(k-1)}$ while DGC-Euler only takes a small step $\Delta t$ ($T/K$) towards the new features $\bS\bX^{(k-1)}$. For linear methods, 
SGC has several fundamental limitations as analyzed in Section \ref{sec:limitations}, while DGC addresses them by flexible and fine-grained numerical integration of the propagation process.

Our dissection of linear GCNs also suggests a different understanding of the over-smoothing problem. As shown in Theorem \ref{thm:aymptotic}, over-smoothing is an inevitable phenomenon of (canonical) GCNs, while we can find a terminal time to achieve an optimal tradeoff between under-smoothing and over-smoothing. However, we cannot expect more layers can bring more profit if the terminal time goes to infinity, that is, the benefits of more layers can only be obtained under a proper terminal time.

\section{Experiments}
\label{sec:experiments}
In this section, we conduct a comprehensive  analysis on DGC and compare it against both linear and non-linear GCN variants on a collection of benchmark datasets.

\begin{table}[t]
\centering
\caption{Test accuracy (\%) of semi-supervised node classification on citation networks.}
\begin{tabular}{llccc}
\bottomrule
\rule{0pt}{1.02\normalbaselineskip} 
Type & Method & Cora & Citeseer  & Pubmed\\
\midrule\multirow{9}{*}{Non-linear} 
& GCN \cite{kipf2016semi} & 81.5 & 70.3 & 79.0 \\
& GAT \cite{velivckovic2018graph}  & 83.0 $\pm$ 0.7 & 72.5 $\pm$ 0.7 & 79.0 $\pm$ 0.3 \\
& GraphSAGE \cite{hamilton2017inductive} & 82.2 & 71.4 & 75.8  \\
& JKNet \cite{xu2018representation} & 81.1 & 69.8 & 78.1 \\
& APPNP \cite{klicpera2018predict} & 83.3 & 71.8 & 80.1 \\
& GWWN \cite{xu2019graph} & 82.8 & 71.7 & 79.1 \\
& GraphHeat \cite{xu2020graph} & 83.7 & 72.5 & 80.5 \\
&CGNN \cite{xhonneux2020continuous} & 84.2 $\pm$ 0.6 & 71.8 $\pm$ 0.7 & 76.8 $\pm$ 0.6 \\
&GCDE \cite{poli2019graph} & 83.8 $\pm$ 0.5 & 72.5 $\pm$ 0.5 & 79.9 $\pm$ 0.3 \\
\midrule
\multirow{5}{*}{Linear} &  Label Propagation \cite{zhu2003semi} & 45.3 &  68.0 & 63.0 \\
& DeepWalk \cite{perozzi2014deepwalk} & 70.7 $\pm$ 0.6 & 51.4 $\pm$ 0.5 & 76.8 $\pm$ 0.6 \\
& SGC \cite{wu2019simplifying} & 81.0 $\pm$ 0.0 & 71.9 $\pm$ 0.1 & 78.9 $\pm$ 0.0\\
& SGC-PairNorm \cite{zhao2019pairnorm} & 81.1 & 70.6 & 78.2 \\
& {SIGN-linear \cite{rossi2020sign}} & 81.7 & 72.4 & 78.6 \\
\cline{2-5} 
\rule{0pt}{1.02\normalbaselineskip} 
& {\textbf{DGC} (ours)} & \textbf{83.3 $\pm$ 0.0} & \textbf{73.3 $\pm$ 0.1} & \textbf{80.3 $\pm$ 0.1} \\
\bottomrule
\end{tabular}
\label{tab:citation-results}
\end{table}

\subsection{Performance on Semi-supervised Node Classification}

\textbf{Setup.} 
For semi-supervised node classification, we use three standard citation networks, Cora, Citeseer, and Pubmed \cite{sen2008collective} and adopt the standard data split as in  \cite{kipf2016semi,velivckovic2018graph,xu2019graph,xu2020graph,poli2019graph}. Here we compare our DGC against several representative non-linear and linear methods that also adopts the standard data split. For non-linear GCNs, we include 1) classical baselines like GCN \cite{kipf2016semi}, GAT \cite{velickovic2019deep}, GraphSAGE \cite{hamilton2017inductive}, APPNP \cite{klicpera2018predict} and JKNet \cite{xu2018representation}; 2) spectral methods using graph heat kernel \cite{xu2019graph,xu2020graph}; and 3) continuous GCNs \cite{poli2019graph,xhonneux2020continuous}. For linear methods, we present the results of Label Propagation \cite{zhu2003semi}, DeepWalk \cite{perozzi2014deepwalk}, SGC (linear GCN) \cite{wu2019simplifying} as well as its regularized version SGC-PairNorm \cite{zhao2019pairnorm}. We also consider a linear version of SIGN \cite{rossi2020sign}, SIGN-linear, which extends SGC by aggregating features from multiple propagation stages ($K=1,2,\dots$). For DGC, we adopt the Euler scheme, \ie DGC-Euler (Eq.~\eqref{eqn:euler-feature-propagation}) by default for simplicity. We report results averaged over 10 random runs.  Data statistics and training details are in Appendix \ref{sec:appendix-training-configurations}.

We compare DGC against both linear and non-linear baselines for the semi-supervised node classification task, and the results are shown in Table \ref{tab:citation-results}.

\textbf{DGC v.s. linear methods.} 
We can see that DGC shows significant improvement over previous linear methods across three datasets. 
In particular, compared to SGC (previous SOTA methods), DGC obtains 83.3 \vs 81.0 on Cora, 73.3 \vs 71.9 on Citeseer and 80.3 \vs 78.9 on Pubmed. This shows that in real-world datasets, a flexible and fine-grained integration by decoupling $T$ and $K$ indeed helps improve the classification accuracy of SGC by a large margin. 
{
Besides, DGC also outperforms the multi-scale SGC, SIGN-linear, suggesting that multi-scale techniques cannot fully solve the limitations of SGC, while DGC can overcome these limitations by decoupling $T$ and $K$. As discussed in Appendix \ref{sec:appendix-sign}, DGC still shows clear advantages over SIGN when controlling the terminal time $T$ while being more computationally efficient, which indicates that the advantage of DGC is not only a real-valued $T$, but also the improved numerical precision by adopting a large $K$.}

\textbf{DGC v.s. non-linear models.} Table \ref{tab:citation-results} further shows that DGC, as a linear model, even outperforms many non-linear GCNs on semi-supervised tasks. First, DGC improves over classical GCNs like GCN \cite{kipf2016semi}, GAT \cite{velivckovic2018graph} and GraphSAGE \cite{hamilton2017inductive} by a large margin. 
Also, DGC is comparable to, and sometimes outperforms, many modern non-linear GCNs. For example, DGC shows a clear advantage over multi-scale methods like JKNet \cite{xu2018representation} and APPNP \cite{klicpera2018predict}. 
DGC is also comparable to spectral methods based on graph heat kernel, \eg GWWN \cite{xu2019graph}, GraphHeat \cite{xu2020graph}, while being much more efficient as a simple linear model. Besides, compared to non-linear continuous models like GCDE \cite{poli2019graph} and CGNN \cite{xhonneux2020continuous}, DGC also achieves comparable accuracy only using a simple linear dynamic.

\begin{table}[t]
\centering
\caption{Test accuracy (\%) of fully-supervised node classification on citation networks.}
\begin{tabular}{llccc}
\bottomrule
\rule{0pt}{1.02\normalbaselineskip} 
Type & Method & Cora & Citeseer  & Pubmed\\
\midrule\multirow{3}{*}{Non-linear} 
& GCN \cite{kipf2016semi} & 85.8 & 73.6 & 88.1 \\
& GAT \cite{velivckovic2018graph}  & 86.4 & 74.3 & 87.6 \\
& JK-MaxPool \cite{xu2018representation} & 89.6 & 77.7 & - \\
& JK-Concat \cite{xu2018representation} & 89.1 & 78.3 & - \\
& JK-LSTM \cite{xu2018representation} & 85.8 & 74.7 & - \\
& APPNP \cite{klicpera2018predict} & 90.2 & 79.8 & 86.3 \\
\midrule
\multirow{2}{*}{Linear} 
& SGC \cite{wu2019simplifying} & 85.8 & 78.1 & 83.3 \\
\cline{2-5} 
\rule{0pt}{1.02\normalbaselineskip} 
& {\textbf{DGC} (ours)} & \textbf{88.2 $\pm$ 0.0} & \textbf{78.7 $\pm$ 0.0} & \textbf{89.4 $\pm$ 0.0} \\
\bottomrule
\end{tabular}
\label{tab:citation-random}
\end{table}

\subsection{Performance on Fully-supervised Node Classification}

\textbf{Setup.} For fully-supervised node classification, we also use the three citation networks, Cora, Citeseer and Pubmed, but instead randomly split the nodes in three citation networks into 60\%, 20\% and 20\% for training, validation and testing, following the previous practice in \cite{xu2018representation}. 
Here, we include the baselines that also have reported results on the fully supervised setting, such as GCN \cite{kipf2016semi}, GAT \cite{velivckovic2018graph} (reported baselines in \cite{xu2018representation}), and the three variants of JK-Net: JK-MaxPool, JK-Concat and JK-LSTM \cite{xu2018representation}. Besides, we also reproduce the result of APPNP \cite{klicpera2018predict} for a fair comparison.  Dataset statistics and training details are described in Appendix. 

\textbf{Results.}
The results of the fully-supervised semi-classification task are basically consistent with the semi-supervised setting. As a linear method, DGC not only improves the state-of-the-art linear GCNs by a large margin, but also outperforms GCN \cite{kipf2016semi}, GAT \cite{velivckovic2018graph} significantly. Beside, DGC is also comparable to multi-scale methods like JKNet \cite{xu2018representation} and APPNP \cite{klicpera2018predict}, showing that a good linear model like DGC is also competitive for  fully-supervised tasks. 

\subsection{Performance on Large Scale Datasets}

\begin{wraptable}{r}{.5\textwidth}
\centering
\caption{Test accuracy (\%) comparison with inductive methods on  on  a large scale dataset, Reddit. Reported results are averaged over 10 runs. OOM: out of memory.}
\begin{tabular}{llc}
\bottomrule
\rule{0pt}{1.02\normalbaselineskip} 
Type & Method & Acc. \\
\midrule\multirow{6}{*}{Non-linear} 
& GCN \cite{kipf2016semi} & {OOM} \\
& FastGCN \cite{chen2018fastgcn} & 93.7 \\
& GraphSAGE-GCN \cite{hamilton2017inductive} & 93.0 \\
& GraphSAGE-mean \cite{hamilton2017inductive} & 95.0 \\
& GraphSAGE-LSTM \cite{hamilton2017inductive} & 95.4 \\
& APPNP \cite{klicpera2018predict} & 95.0 \\
\midrule \multirow{3}{*}{Linear} 
& RandDGI \cite{velickovic2019deep} & 93.3 \\
& SGC \cite{wu2019simplifying} & 94.9 \\
\cline{2-3}
\rule{0pt}{1.02\normalbaselineskip} 
& \textbf{DGC} (ours)  & \textbf{95.8} \\
\bottomrule
\end{tabular}
\label{tab:reddit-results}
\end{wraptable}

\textbf{Setup.} 
More rigorously, we also conduct the comparison on a large scale node classification dataset, the Reddit networks \cite{hamilton2017inductive}. Following SGC \cite{wu2019simplifying}, we adopt the inductive setting, where we use the subgraph of training nodes as training data and use the whole graph for the validation/testing data. For a fair comparison, we use the same training configurations as SGC \cite{wu2019simplifying} and include its reported baselines, such as GCN \cite{kipf2016semi}, FastGCN \cite{chen2018fastgcn}, three variants of GraphSAGE \cite{hamilton2017inductive}, and RandDGI (DGI with randomly initialized encoder) \cite{velickovic2019deep}.
We also include APPNP \cite{klicpera2018predict} for a comprehensive comparison.

\textbf{Results.} 
We can see DGC still achieves the best accuracy among linear methods and improve 0.9\% accuracy over SGC. Meanwhile, it is superior to the three variants of GraphSAGE as well as APPNP. Thus, DGC is still the \sota linear GCNs and competitive against nonlinear GCNs on large scale datasets.

\begin{figure}[t]
    \centering
    \includegraphics[width=\linewidth]{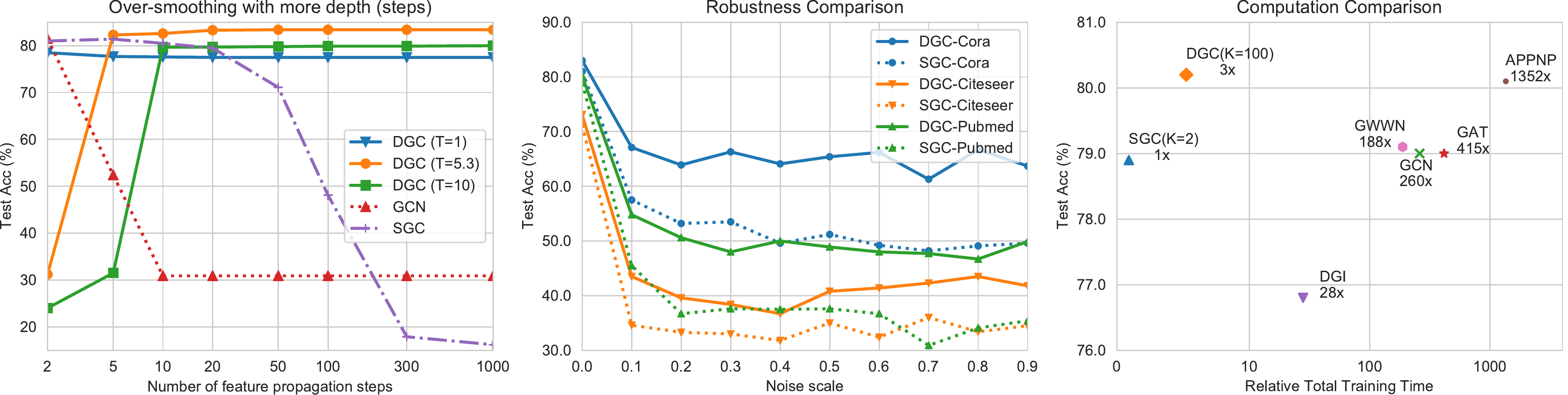}
    \caption{ Left: test accuracy (\%) with increasing feature propagation steps on Cora. Middle: comparison of robustness under different noise scales $\sigma$ on three citation networks. Right: a comparison of relative total training time for 100 epochs on the Pubmed dataset.}
    \label{fig:depth-comparison}
\end{figure}

\subsection{Empirical Understandings of DGC}

\textbf{Setup.} Here we further provide a comprehensive analysis of DGC.
First, we compare its over-smoothing behavior and computation time against previous methods. Then we analyze several factors that affect the performance of DGC, including the Laplacian matrix $\bL$, the numerical schemes and the terminal time $T$. Experiments are conducted on the semi-supervised learning tasks and we adopt DGC-Euler with the default hyper-parameters unless specified.

{\textbf{Non-over-smoothing with increasing steps.}} In the left plot of Figure \ref{fig:depth-comparison}, we compare different GCNs with increasing model depth (non-linear GCNs) or propagation steps (linear GCNs) from 2 to 1000. Baselines include SGC \cite{wu2019simplifying}, GCN \cite{kipf2016semi}, and our DGC with three different terminal time $T$ (1, 5.3, 10). First we notice that SGC and GCN fail catastrophically when increasing the depth, which is consistent with the previously observed over-smoothing phenomenon. 
Instead, all three DGC variants can benefit from increased steps. Nevertheless, the final performance will degrade if the terminal time is either too small ($T=1$, under-smoothing) or too large ($T=10$, over-smoothing). DGC enables us to flexibly find the optimal terminal time ($T=5.3$). Thus, we can obtain the optimal accuracy with an optimal tradeoff between under-smoothing and over-smoothing.

\textbf{Robustness to feature noise.} In real-world applications, there are plenty of noise in the collected node attributes, thus it is crucial for GCNs to be robust to input noise \cite{bojchevski2019certifiable}. Therefore, we compare the robustness of SGC and DGC against Gaussian noise added to the input features, where $\sigma$ stands for the standard deviation of the noise. Figure \ref{fig:depth-comparison} (middle) shows that DGC is significantly more robust than SGC across three citation networks, and the advantage is clearer on larger noise scales. {As discussed in Theorem \ref{thm:learning-risk}, the diffusion process in DGC can be seen as a denoising procedure, and consequently, DGC's robustness to feature noise can be contributed to the optimal tradeoff between over-smoothing and under-smoothing with a flexible choice of $T$ and $K$. In comparison, SGC is not as good as DGC because it cannot find such a sweet spot accurately.}

\textbf{Computation time.} In practice, linear GCNs can accelerate training by pre-processing features with all propagation steps and storing them for the later model training. Since pre-processing costs much fewer time than training ($<$5\% in SGC), linear GCNs could be much faster than non-linear ones. As shown in Figure \ref{fig:depth-comparison} (right), DGC is slightly slower (3$\times$) than SGC, but DGC achieves much higher accuracy. Even so, DGC is still much faster than non-linear GCNs ($>$100$\times$). {Indeed, as further shown in Table \ref{tab:computation-time}, the computation overhead of DGC over SGC mainly lies in the preprocessing stage, which is very small in SGC and only leads to around twice longer total time. Instead, GCN is much slower as it involves propagation in each training loop, leading to much slower training.}

\begin{table}[t]
\centering
\caption{Comparison of explicit computation time of different training stages on the Pubmed dataset with a single NVIDIA GeForce RTX 3090 GPU.}
\begin{tabular}{lllll}
\toprule
Type & Method      & Preprocessing Time & Training Time & Total Time \\ \midrule
\multirow{3}{*}{Linear} & SGC ($K=2$) \cite{wu2019simplifying}   & 3.8 ms             & 61.5 ms       & 65.3 ms    \\
& DGC ($K=2$)  (ours)  & 3.8 ms             & 61.5 ms       & 65.3 ms    \\
& DGC ($K=100$) (ours) & 169.2 ms           & 55.8 ms       & 225.0 ms   \\
\midrule
Nonlinear & GCN  \cite{kipf2016semi}        & 0                  & 17.0 s        & 17.0 s    \\
\bottomrule
\end{tabular}
\label{tab:computation-time}
\end{table}

\textbf{Graph Laplacian.} 
As shown in Figure \ref{fig:model-analysis} (left), in DGC, both the two  Laplacians, $\bL$ (with self-loop) and $\bL_{\rm sym}$ (without self-loop), can consistently benefit from more propagation steps without leading to numerical issues. 
Further comparing the two Laplacians, we can see that the augmented Laplacian $\bL$ obtains higher test accuracy than the canonical Laplacian $\bL_{\rm sym}$ and requires fewer propagation steps $K$ to obtain good results, which  
could also be understood from our analysis in Section \ref{sec:limitations}.

\begin{figure}[!t]
    \centering
    \includegraphics[width=\textwidth]{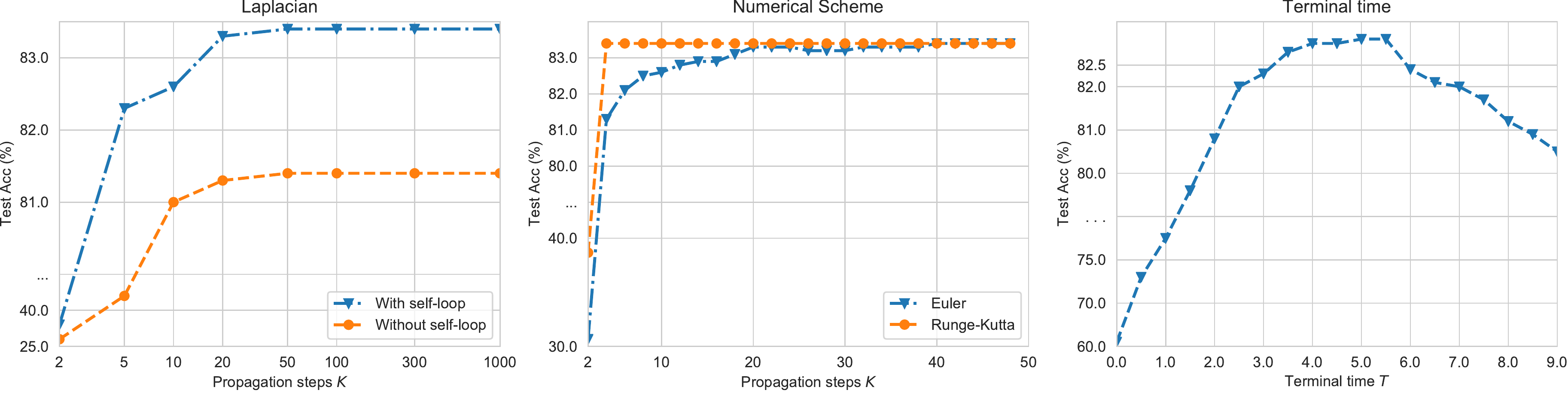}
    \caption{Algorithmic analysis of our proposed DGC. Left: test accuracy (\%) of two kinds of Laplacian, $\bL=\bI-\bS$ (with self-loop) and $\bL_{\rm sym}=\bI-\bS_{\rm sym}$ (without self-loop), with increasing steps $K$ and fixed time $T$ on Cora. Middle: test accuracy (\%) of two numerical schemes, Euler and Runge-Kutta, with increasing steps $K$ and fixed $T$ under fixed terminal time on Cora. Right: test accuracy (\%) with varying terminal time $T$ and fixed steps $K$ on Cora.
    }
    \label{fig:model-analysis}
    \vspace{-0.15in}
\end{figure}

\textbf{Numerical scheme.} 
By comparing different numerical schemes in Figure \ref{fig:model-analysis} (middle), we find that the Runge-Kutta method demonstrates better accuracy than the Euler method with a small $K$. Nevertheless, as $K$ increases, the difference gradually vanishes. 
Thus, the Euler method is sufficient for DGC to achieve good performance and it is more desirable in term of its simplicity and efficiency.

\begin{wraptable}{r}{.5\textwidth}
    \centering
    \caption{Optimal terminal time $T^*$ on the transductive task, Pubmed, and the inductive task, Reddit, with different Laplacians. }
    \begin{tabular}{lccc}
    \toprule
    Dataset & Laplacian & $T^*$ & Acc \\
    \midrule
    \multirow{2}{*}{Pubmed} & $\bI-\bS $ & 6.0& \textbf{80.3} \\
     & $\bI-\bS_{\rm sym}$ & 6.0  & 79.8 \\
    \midrule
    \multirow{2}{*}{Reddit} & $\bI-\bS $ & 2.7  & 95.5 \\
     & $\bI-\bS_{\rm sym}$ & 2.6 & \textbf{95.8} \\
    \bottomrule
    \end{tabular}
    \label{tab:list-T-K-acc}
    \end{wraptable}

\textbf{Terminal time $T$.} In Figure \ref{fig:model-analysis} (right), we compare the test accuracy with different terminal time $T$. We show that indeed, in real-world datasets, there exists a sweet spot that achieves the optimal tradeoff between under-smoothing and over-smoothing. 
In Table \ref{tab:list-T-K-acc}, we list the best terminal time that we find on two large graph datasets, Pubmed and Reddit. We can see that $T$ is almost consistent across different Laplacians on each dataset, which suggests that the optimal terminal time $T^*$ is an intrinsic property of the dataset.

\section{Conclusions}
In this paper, we have proposed Decoupled Graph Convolution (DGC), which improves significantly over previous linear GCNs through decoupling the terminal time and feature propagation steps from a continuous perspective. Experiments show that our DGC is competitive with many modern variants of non-linear GCNs while being much more computationally efficient with much fewer parameters to learn.

Our findings suggest that, unfortunately, current GCN variants still have not shown significant advantages over a properly designed linear GCN.
We believe that this would attract the attention of the community to reconsider the actual representation ability of current nonlinear GCNs and propose new alternatives that can truly benefit from nonlinear architectures. 

\section*{Acknowledgement}
Yisen Wang is partially supported by the National Natural Science Foundation of China under Grant 62006153, and Project 2020BD006 supported by PKU-Baidu Fund. 
Jiansheng Yang is supported by the National Science Foundation of China under Grant No. 11961141007.
Zhouchen Lin is supported by the NSF China under Grants 61625301 and 61731018, Project 2020BD006 supported by PKU-Baidu Fund, and Zhejiang Lab (grant no. 2019KB0AB02).

\nocite{langley00}

\bibliography{main}
\bibliographystyle{plain}

\appendix

\section{Training Configurations}
\label{sec:appendix-training-configurations}

\textbf{Data statistics.} 
We summarize the data statistics in our experiments in Table \ref{tab:data-statistics}.

\begin{table}[h]
    \centering
    \caption{Dataset statistics of the three learning tasks in our experiments.}
    \begin{tabular}{llcccc}
    \toprule
    Learning Task & Dataset & Nodes & Edges & Train/Dev/Test Nodes & Split Ratio (\%) \\
    \midrule
\multirow{3}{*}{Semi-supervised} & Cora & 2,708 & 5,429 & 140/500/1,000 & 5.2/18.5/36.9 \\
& Citeseer & 3,327 & 4,732 & 120/500/1,000  & 3.6/15.0/30.1 \\
& Pubmed & 19,717 & 44,338 & 60/500/1,000 & 0.3/2.5/5.1 \\
\midrule
\multirow{3}{*}{Fully-supervised} & Cora & 2,708 & 5,429 & 1624/541/543 & 60.0/20.0/20.0\\
& Citeseer & 3,327 & 4,732 & 1996/665/666  & 60.0/20.0/20.0\\
& Pubmed & 19,717 & 44,338 & 11830/3943/3944  & 60.0/20.0/20.0\\
\midrule
Inductive (large-scale) & Reddit & 233K & 11.6M & 152K/24K/55K & 65.2/10.3/23.6 \\
\bottomrule
    \end{tabular}
    \label{tab:data-statistics}
\end{table}

\textbf{Training hyper-parameters.} For both fully and semi-supervised node classification tasks on the citation networks, Cora, Citeseer and Pubmed, we train our DGC following the hyper-parameters in SGC \cite{wu2019simplifying}. Specifically, we train DGC for 100 epochs using Adam \cite{kingma2014adam} with learning rate 0.2. For weight decay, as in SGC, we tune this hyperparameter on each dataset using hyperopt \cite{bergstra2015hyperopt} for 10,000 trails. For the large-scale inductive learning task on the Reddit network, we also follow the protocols of SGC \cite{wu2019simplifying}, where we use L-BFGS \cite{liu1989limited} optimizer for 2 epochs with no weight decay.

\section{Omitted Proofs}
\label{sec:appendix-proof}
\subsection{Proof of Theorem 1}

\begin{theorem}
The heat kernel $\bH_t=e^{-t\bL}$ admits the following eigen-decomposition,
\begin{align*}
\bH_t
&=\bU 
\begin{pmatrix}
e^{-\lambda_1t} & 0 & \cdots & 0\\
0 & e^{-\lambda_2t} & \cdots & 0\\
\vdots & \vdots &\ddots & \vdots\\
0 & 0 &\cdots & e^{-\lambda_nt}
\end{pmatrix}
\bU^\top. 
\end{align*}
As a result, with $\lambda_i\geq0$, we have 
\begin{equation}
\lim_{t\to\infty}e^{-\lambda_it}=
\begin{cases}
0, &\text{ if } \lambda_i>0\\
1, &\text{ if } \lambda_i=0
\end{cases},\
i=1,\dots,n.
\end{equation}
\end{theorem}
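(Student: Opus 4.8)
The plan is to reduce everything to the spectral theorem for symmetric matrices combined with the power-series definition of the matrix exponential. First I would observe that, by construction, $\bL=\bI-\bS$ with $\bS=\widetilde{\bD}^{-1/2}\widetilde{\bA}\widetilde{\bD}^{-1/2}$ symmetric, so $\bL$ is a real symmetric matrix. Hence the spectral theorem guarantees an orthonormal eigenbasis: there exist an orthogonal matrix $\bU$ (whose columns are the eigenvectors $\bu_i$, so $\bU^\top\bU=\bI$) and a diagonal matrix $\emLambda=\diag(\lambda_1,\dots,\lambda_n)$ with $\bL=\bU\emLambda\bU^\top=\sum_{i=1}^n\lambda_i\bu_i\bu_i^\top$, which is exactly the decomposition assumed in the statement.

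Next I would evaluate the heat kernel through this decomposition. Since $\bU$ is orthogonal, powers of $\bL$ collapse cleanly as $\bL^k=\bU\emLambda^k\bU^\top$, because the intermediate factors $\bU^\top\bU$ reduce to the identity. Substituting this into the (absolutely convergent) power series defining the matrix exponential gives
\[
\bH_t=e^{-t\bL}=\sum_{k=0}^\infty\frac{(-t)^k}{k!}\bL^k=\bU\left(\sum_{k=0}^\infty\frac{(-t)^k}{k!}\emLambda^k\right)\bU^\top=\bU\,e^{-t\emLambda}\,\bU^\top.
\]
Because $\emLambda$ is diagonal, its exponential is computed entrywise, $e^{-t\emLambda}=\diag(e^{-\lambda_1 t},\dots,e^{-\lambda_n t})$, which yields precisely the claimed eigendecomposition of $\bH_t$. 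The absolute convergence of the series is what licenses pulling $\bU$ and $\bU^\top$ out of the sum, so I would state that justification explicitly.

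For the asymptotic part I would first record that the eigenvalues are nonnegative. This is the one non-mechanical point: it follows from $\bL$ being positive semidefinite, i.e. $\bL=\bI-\bS$ with the symmetrically normalized adjacency $\bS$ having spectral radius at most $1$, so every $\lambda_i\ge 0$. With $\lambda_i\ge 0$ in hand, the limit is immediate from elementary properties of the real exponential: if $\lambda_i>0$ then $e^{-\lambda_i t}\to 0$ as $t\to\infty$, while if $\lambda_i=0$ then $e^{-\lambda_i t}\equiv 1$. This gives the stated case distinction, and reading it back through the eigendecomposition shows that $\bH_t$ asymptotically projects onto the eigenspace of the zero eigenvalue, i.e. the non-informative equilibrium.

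I would expect no serious obstacle here; the only step worth care is verifying $\lambda_i\ge 0$ (equivalently $\|\bS\|\le 1$), since everything else is the standard diagonalization of a matrix exponential together with a one-line limit.
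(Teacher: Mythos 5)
Your proposal is correct and follows essentially the same route as the paper's proof: diagonalize $\bL=\bU\bLambda\bU^\top$, push the power series of $e^{-t\bL}$ through the orthogonal conjugation to obtain $\bU e^{-t\bLambda}\bU^\top$, and read off the limit entrywise. The only addition is your explicit justification that $\lambda_i\ge 0$ via $\Vert\bS\Vert\le 1$, which the paper simply assumes as part of the hypothesis.
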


\begin{proof}
With the eigen-decomposition of the Laplacian $\bL=\bU\bLambda\bU^\top$, the heat kernel can be written equivalently as
\begin{align}
\bH_t&=e^{-t\bL}
=\sum_{k=0}^{\infty}\frac{t^k}{k!}(-\bL)^k=\sum_{k=0}^{\infty}\frac{t^k}{k!}\left[\bU(-\bLambda)\bU^\top\right]^k=\bU\left[\sum_{k=0}^{\infty}\frac{t^k}{k!}(-\bLambda)^k\right]\bU^T=\bU e^{-t\bLambda}\bU^T,
\end{align}
which corresponds to the eigen-decomposition of the heat kernel with eigen-vectors in the orthogonal matrix $\bU$ and eigven-values in the diagonal matrix $e^{-t\bLambda}$. Now it is easy to see the limit behavior of the heat kernel as $t\to\infty$ from the spectral domain.
\end{proof}

\subsection{Proof of Theorem 2}
\begin{theorem}
\label{them2}
For the general initial value problem 
\begin{equation}
\begin{cases}
\frac{d\bX_t}{dt}&=-\bL\bX_t,\, \\
\bX_0&=\bX,
\end{cases}
\label{eqn:general-heat-equation}
\end{equation}
with any finite terminal time $T$, the numerical error of the forward Euler method 
\begin{equation}
    \hat\bX^{(K)}_T=\left(\bI-\frac{T}{K}\bL\right)^K\bX_0.
    \label{eqn:forward-euler}
\end{equation}
with $K$ propagation steps can be upper bounded by
\begin{equation}
\Vert \be^{(K)}_T\Vert\leq \frac{T\Vert\bL\Vert\Vert\bX_0\Vert}{2K}\left(e^{T\Vert\bL\Vert}-1\right).
\label{eqn:numerical-error}
\end{equation}
\end{theorem}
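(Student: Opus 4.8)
The plan is to run the classical global-error analysis for the forward Euler scheme, exploiting the fact that the right-hand side $f(\bX)=-\bL\bX$ is linear with Lipschitz constant $\Vert\bL\Vert$ and that the exact solution is the contraction semigroup $\bX_t=e^{-t\bL}\bX_0$. Writing $h=T/K$ for the step size and $t_n=nh$, I would first control the \emph{local} truncation error. Since $\bX'_t=-\bL\bX_t$ and hence $\bX''_t=\bL^2\bX_t$, Taylor's theorem with the integral form of the remainder gives
\begin{equation}
\bX_{t_{n+1}}=(\bI-h\bL)\bX_{t_n}+\bR_n,\qquad \bR_n=\int_{t_n}^{t_{n+1}}(t_{n+1}-s)\,\bL^2\bX_s\,ds.
\end{equation}
I use the integral rather than the Lagrange remainder because $\bX_t$ is matrix-valued, so the mean-value form need not apply. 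Because $\bL$ is symmetric positive semidefinite, $\Vert e^{-s\bL}\Vert\le 1$, so $\Vert\bX_s\Vert\le\Vert\bX_0\Vert$ for all $s\ge 0$, and therefore $\Vert\bR_n\Vert\le\tfrac{h^2}{2}\Vert\bL\Vert^2\Vert\bX_0\Vert$.

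Next I would turn the per-step estimate into a recursion for the global error. Subtracting the Euler update $\hat\bX_{t_{n+1}}=(\bI-h\bL)\hat\bX_{t_n}$ from the display above and setting $\be_n=\bX_{t_n}-\hat\bX_{t_n}$ (with $\be_0=\vzero$) yields $\be_{n+1}=(\bI-h\bL)\be_n+\bR_n$, so that
\begin{equation}
\Vert\be_{n+1}\Vert\le(1+h\Vert\bL\Vert)\Vert\be_n\Vert+\tfrac{h^2}{2}\Vert\bL\Vert^2\Vert\bX_0\Vert.
\end{equation}
This is a scalar linear recurrence $u_{n+1}\le a\,u_n+b$ with $a=1+h\Vert\bL\Vert$ and $b=\tfrac{h^2}{2}\Vert\bL\Vert^2\Vert\bX_0\Vert$; unrolling it from $u_0=0$ gives $\Vert\be_K\Vert\le b\,(a^K-1)/(a-1)$.

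Finally I would simplify the constants. Since $a-1=h\Vert\bL\Vert$, the prefactor collapses to $b/(a-1)=\tfrac{h}{2}\Vert\bL\Vert\Vert\bX_0\Vert$, and using $1+x\le e^{x}$ I bound $a^K=(1+h\Vert\bL\Vert)^K\le e^{Kh\Vert\bL\Vert}=e^{T\Vert\bL\Vert}$. Substituting $h=T/K$ then gives
\begin{equation}
\Vert\be^{(K)}_T\Vert\le\frac{h\Vert\bL\Vert\Vert\bX_0\Vert}{2}\bigl(e^{T\Vert\bL\Vert}-1\bigr)=\frac{T\Vert\bL\Vert\Vert\bX_0\Vert}{2K}\bigl(e^{T\Vert\bL\Vert}-1\bigr),
\end{equation}
which is the claimed estimate. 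The main obstacle is getting the constants to land exactly on the stated form: a naive telescoping of $e^{-T\bL}-(\bI-h\bL)^K=A^K-B^K$ with a crude local bound produces $e^{T\Vert\bL\Vert}$ rather than $e^{T\Vert\bL\Vert}-1$ and an extra factor of $\Vert\bL\Vert$. The key is therefore to accumulate the local errors through the discrete stability factor $(1+h\Vert\bL\Vert)$ and recognize the resulting geometric sum, together with the contraction bound $\Vert\bX_s\Vert\le\Vert\bX_0\Vert$ that keeps the local error uniform in $s$.
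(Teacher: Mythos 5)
Your proof is correct and follows essentially the same route as the paper's: bound the local truncation error by $\tfrac{h^2}{2}\Vert\bL\Vert^2\Vert\bX_0\Vert$ using the contraction property $\Vert\bX_s\Vert\le\Vert\bX_0\Vert$, propagate it through the recursion $\Vert\be_{n+1}\Vert\le(1+h\Vert\bL\Vert)\Vert\be_n\Vert+h\Vert\bT^{(n)}\Vert$, sum the geometric series, and apply $1+x\le e^x$. Your use of the integral form of the Taylor remainder is a minor (and welcome) refinement over the paper's mean-value statement for the matrix-valued solution, but the argument is otherwise identical.
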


\begin{proof}
Consider a general Euler forward scheme for our initial problem 
\begin{equation}
\begin{gathered}
    \hat\bX^{(k+1)} = \hat\bX^{(k)} -h\bL\hat\bX_t,\quad k=0,1,\dots,K-1,\quad \bX^{(0)}=\bX,
\end{gathered}
    \label{eqn:euler-forward}
\end{equation}
where $\hat\bX^{(k)}$ denotes the approximated $\bX$ at step $k$, $h$ denotes the step size and the terminal time $T=Kh$. We denote the global error at step $k$ as
\begin{equation}
    \be_k=\bX^{(k)} - \hat\bX^{(k)},
\end{equation}
and the truncation error of the Euler forward finite difference (Eqn. \eqref{eqn:euler-forward}) at step $k$ as 
\begin{equation}
    \bT^{(k)}=\frac{\bX^{(k+1)}-\bX^{(k)}}{h}+\bL\bX^{(k)}.
    \label{eqn:truncation-error-definition}
\end{equation}

We continue by noting that Eqn. \eqref{eqn:truncation-error-definition} can be written equivalently as 
\begin{equation}
    \bX^{(k+1)}=\bX^{(k)} + h\left(\bT^{(k)}-\bL\bX^{(k)}\right).
    \label{eqn:exact-update-with-truncation-error}
\end{equation}
Taking the difference of Eqn. \eqref{eqn:exact-update-with-truncation-error} and \eqref{eqn:euler-forward}, we have
\begin{equation}
    \be^{(k+1)} = (1-h\bL)\be^{(k)} + h \bT^{(k)},
\end{equation}
whose norm can be upper bounded as 
\begin{equation}
    \left\Vert\be^{(k+1)}\right\Vert \leq (1+h\Vert\bL\Vert)\left\Vert\be^{(k)}\right\Vert + h \left\Vert\bT^{(k)}\right\Vert.
\end{equation}
Let $M=\max_{0\leq k\leq K-1}\Vert\bT^{(k)}\Vert$ be the upper bound on global truncation error, we have
\begin{equation}
    \left\Vert\be^{(k+1)}\right\Vert \leq (1+h\Vert\bL\Vert)\left\Vert\be^{(k)}\right\Vert + hM.
\end{equation}
By induction, and noting that $1+h\Vert\bL\Vert\leq e^{h\Vert\bL\Vert}$ and $\be^{(0)}=\bX^{(0)}-\hat\bX^{(0)}=\bzero$, we have
\begin{equation}
\begin{aligned}
\left\Vert\be^{(K)}\right\Vert &\leq \frac{M}{\Vert\bL\Vert}\left[(1+h \Vert\bL\Vert)^{n}-1\right] \leq\frac{M}{\Vert\bL\Vert}\left(e^{Kh\Vert\bL\Vert}-1\right).
\end{aligned}
\end{equation}
Now we note that $\frac{d\bX^{(k)}}{dt}=-\bL\bX^{(k)}$ and applying Taylor's theorem, there exists $\delta\in[nh,(k+1)h]$ such that the truncation error $\bT^{(k)}$ in Eqn. \eqref{eqn:truncation-error-definition} follows
\begin{equation}
    \bT^{(k)}=\frac{1}{2h}\bL^2\bX_\delta.
\end{equation}
Thus the truncation error can be bounded by 
\begin{equation}
    \left\Vert\bT^{(k)}\right\Vert=\frac{1}{2h}\Vert\bL\Vert^2\Vert\bX_\delta\Vert
    \leq\frac{1}{2h}\Vert\bL\Vert^2\Vert\bX_0\Vert,
    \label{eqn:truncation-error-upper-bound}
\end{equation}
because 
\begin{equation}
 \Vert\bX_\delta\Vert=\left\Vert e^{-\delta\bL}\bX_0\right\Vert\leq\left\Vert\bX_0\right\Vert,\ \forall\delta\geq0.   
\end{equation}
Together with the fact $T=Kh$, we have
\begin{equation}
\begin{aligned}
\left\Vert\be^{(K)}\right\Vert &\leq \frac{\Vert\bL\Vert^2\Vert\bX_0\Vert}{2h\Vert\bL\Vert}\left(e^{Kh\Vert\bL\Vert}-1\right) =\frac{T\Vert\bL\Vert\Vert\bX_0\Vert}{2K}\left(e^{T\Vert\bL\Vert}-1\right),
\end{aligned}
\end{equation}
which completes the proof.
\end{proof}

\subsection{Proof of Theorem 3}

For the ground-truth data generation process
\begin{gather}
\bY=\bX_c\bW_c+ \sigma_y\bvarepsilon_y,\ \bvarepsilon_y\sim\cN(\bzero,\bI); \label{eqn:example-ground-truth-generation}
\end{gather}
together with the data corruption process,
\begin{equation}
\frac{d\widetilde\bX_t}{dt}=\bL\widetilde\bX_t, 
~\text{ where }\widetilde\bX_0=\bX_c ~\text{ and }~ \widetilde\bX_{T^*}=\bX. 
\label{eqn:example-inverse-heat-equation}
\end{equation}
and the final state $\bX$ denote the observed data. Then,
we have the following bound its population risks.
\begin{theorem}
Denote the population risk of the ground truth regression problem with weight $\bW$ as 
\begin{equation}
  R(\bW)=\bbE_{p(\bX_c,\bY)}\left\Vert \bY-\bX_c\bW\right\Vert^2.
\end{equation}
and that of the corrupted regression problem as 
\begin{equation}
  \hat R(\bW)=\bbE_{p(\hat\bX,\bY)}\left\Vert \bY-[\bS^{(\hat T/K)}]^K\bX\bW\right\Vert^2.
\end{equation}
Supposing that $\bbE\Vert\bX_c\Vert^2=M<\infty$, we have the following upper bound on the latter risk:
\begin{align*}
 \hat R(\bW)\leq R(\bW)+ 2\left\Vert\bW\right\Vert^2\left(
 \bbE\left\Vert\be^{(K)}_{\hat T}\right\Vert^2
 +M\left\Vert e^{T^\star\bL}\right\Vert^2\cdot\left\Vert e^{-T^\star\bL}-e^{-\hat T\bL} \right\Vert^2
 \right). 
\end{align*}
\end{theorem}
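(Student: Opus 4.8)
The plan is to reduce everything to closed-form solutions of the two ODEs, inject the numerical error as a single additive term, and then expand the mean-squared error. First I would solve the corruption dynamics in Eq.~\eqref{eqn:example-inverse-heat-equation}: since $\bL$ is symmetric and $d\widetilde\bX_t/dt=\bL\widetilde\bX_t$, the exact flow is $\widetilde\bX_t=e^{t\bL}\bX_c$, so the observed features satisfy $\bX=\widetilde\bX_{T^\star}=e^{T^\star\bL}\bX_c$. On the model side, $[\bS^{(\hat T/K)}]^K$ is exactly the $K$-step forward Euler discretization (Eq.~\eqref{eqn:forward-euler}) of the forward heat equation run to terminal time $\hat T$, whose exact solution from initial value $\bX$ is $e^{-\hat T\bL}\bX$. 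By the very definition of the global numerical error in Theorem~\ref{thm:numeircal}, this lets me write the processed features as $[\bS^{(\hat T/K)}]^K\bX=e^{-\hat T\bL}\bX-\be^{(K)}_{\hat T}$.

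Next I would form the ``denoising residual'' $\bxi:=\bX_c-[\bS^{(\hat T/K)}]^K\bX$, which measures how far DGC is from recovering the clean features. Substituting $\bX=e^{T^\star\bL}\bX_c$ and using that all functions of the symmetric matrix $\bL$ commute, I get $\bxi=\big(\bI-e^{(T^\star-\hat T)\bL}\big)\bX_c+\be^{(K)}_{\hat T}$. The key algebraic step is the factorization $\bI-e^{(T^\star-\hat T)\bL}=e^{T^\star\bL}\big(e^{-T^\star\bL}-e^{-\hat T\bL}\big)$, which cleanly separates the contribution of the terminal-time mismatch $\hat T\neq T^\star$ from the numerical error $\be^{(K)}_{\hat T}$. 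With $\hat\bX:=[\bS^{(\hat T/K)}]^K\bX=\bX_c-\bxi$, the corrupted risk becomes $\hat R(\bW)=\bbE\big\Vert(\bY-\bX_c\bW)+\bxi\bW\big\Vert^2$.

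Finally I would expand this square. Writing $\bY-\bX_c\bW=\bX_c(\bW_c-\bW)+\sigma_y\bvarepsilon_y$, the cross terms involving $\bvarepsilon_y$ vanish because $\bvarepsilon_y$ is zero-mean and independent of $\bX_c$ (hence of $\bxi$, which is a fixed function of $\bX_c$). It then remains to bound $\bbE\Vert\bxi\bW\Vert^2$: applying $\Vert a+b\Vert^2\le 2\Vert a\Vert^2+2\Vert b\Vert^2$ to the two pieces of $\bxi\bW$, operator-norm submultiplicativity, and the moment bound $\bbE\Vert\bX_c\Vert^2=M$ yields exactly $\bbE\Vert\bxi\bW\Vert^2\le 2\Vert\bW\Vert^2\big(\bbE\Vert\be^{(K)}_{\hat T}\Vert^2+M\Vert e^{T^\star\bL}\Vert^2\Vert e^{-T^\star\bL}-e^{-\hat T\bL}\Vert^2\big)$, the stated perturbation term. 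The main obstacle is the remaining signal cross term $2\bbE\langle\bX_c(\bW_c-\bW),\bxi\bW\rangle$: it does not vanish for general $\bW$, and bounding it crudely (e.g.\ via $\Vert a+b\Vert^2\le 2\Vert a\Vert^2+2\Vert b\Vert^2$ applied to $\bY-\hat\bX\bW$ directly) would only give coefficient $2$ on $R(\bW)$. Getting the clean coefficient $1$ in front of $R(\bW)$ requires controlling this term --- e.g.\ observing it is negligible near the data-generating weights $\bW\approx\bW_c$, where $\bX_c(\bW_c-\bW)\to\bzero$ --- which is where I expect the delicate part of the argument to lie.
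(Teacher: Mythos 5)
Your proposal follows essentially the same route as the paper's own proof: the same identity $\bX_c=e^{-T^\star\bL}\bX$ from the corruption dynamics, the same split of the residual into the numerical error $\be^{(K)}_{\hat T}$ plus the terminal-time mismatch $\bigl(e^{-T^\star\bL}-e^{-\hat T\bL}\bigr)e^{T^\star\bL}\bX_c$, and the same $\Vert a+b\Vert^2\le 2\Vert a\Vert^2+2\Vert b\Vert^2$ bound applied only to that perturbation piece. The cross term you flag as the delicate point is in fact not resolved in the paper either --- its first inequality simply discards $2\,\bbE\langle \bY-\bX_c\bW,\ \bxi\bW\rangle$ without justification (it need not be non-positive for general $\bW$), so your explicit accounting of this step is, if anything, more careful than the paper's own argument.
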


\begin{proof}
Given the fact that $\bX_c=\tilde \bX_0=e^{-T^*\bL}\bX$, we can decompose the corrupted population risk as follows
\begin{equation}
\begin{aligned}
&\hat R(\bW)=\bbE_{p(\hat\bX,\bY)}\left\Vert \bY-\left[\bS^{(\hat T/K)}\right]^K\bX\bW\right\Vert^2 \\
=&\bbE_{p(\bX,\bY)}\left\Vert\bY-\bX_c\bW+\left(e^{-T^\star\bL}-\left[\bS^{(\hat T/K)}\right]^K\right)\bX\bW\right\Vert^2\\
\leq&\bbE_{p(\bX,\bY)}\left\Vert\bY-\bX_c\bW\right\Vert^2+
+\left\Vert\bW\right\Vert^2\bbE_{p(\bX,\bY)}\left\Vert\left(\left[e^{-\hat T\bL}-\bS^{(\hat T/K)}\right]^K\right)\bX+\left(e^{-T^\star\bL}-e^{-\hat T\bL}\right)\bX\right\Vert^2\\
\leq&\bbE_{p(\bX,\bY)}\left\Vert\bY-\bX_c\bW\right\Vert^2
+\left\Vert\bW\right\Vert^2\bbE_{p(\bX,\bY)}\left\Vert\be^{(K)}_{\hat T}+\left(e^{-T^\star\bL}-e^{-\hat T\bL}\right)e^{T^\star\bL}\bX_c\right\Vert^2\\
\leq&R(\bW)+2\left\Vert\bW\right\Vert^2\left(\bbE\left\Vert\be_{\hat T}^{(K)}\right\Vert^2 + M\left\Vert e^{T^\star\bL} \right\Vert^2\left\Vert e^{-T^\star\bL}-e^{-\hat T\bL}\right\Vert^2\right),
\end{aligned}
\end{equation}
which completes the proof.
\end{proof}

\section{Further Comparison of SIGN and DGC}
\label{sec:appendix-sign}

Here, we provide a more detailed comparison of DGC and SIGN \cite{rossi2020sign}. In particular, the SIGN model is 
$$
\begin{aligned}
Y=\xi(Z\Omega),\quad  Z=\sigma([X\Theta_0,A_1X\Theta_1,\dots,A_rX\Theta_r]),
\end{aligned}
$$
where $\sigma,\xi$ are nonlinearities, $A_k=A^k$ is the $k$-hop propagation matrix, and $\Theta_k,\Omega$ are weight matrices. Our DGC-Euler model takes the form 
$$
 Y=\xi(X^{(K)}\Omega),\quad X^{(k)}=(1-T/K)X^{(k-1)}+(T/K)\cdot AX^{(k-1)},\quad k=2,\dots,K.
$$

The two models 1) both apply all feature propagation before the classification model and so that can 2) both pre-process the propagation matrix and save it for later training.  Nevertheless, there are several critical differences between SIGN and DGC:
\begin{itemize}
\item \textbf{Linear v.s. Nonlinear.} DGC is a linear model, while SIGN is nonlinear.
\item \textbf{Multi-scale SGC (SIGN) v.s. single-scale continuous diffusion (DGC).} SIGN is a multi-scale method that extracts every possible scale ($A^rX, r=0,1,\dots$) for feature propagation. Thus, SIGN resembles a multi-scale SGC, but still inherits some of the limitations of SGC, e.g. a fixed step size $\Delta t=1$. On the contrary, the goal of DGC is to find the optimal tradeoff between under-smoothing and over-smoothing with a flexible choice of $T$ (real-numbered) and fine-grained integration ($K$), so it only uses a single-scale propagation kernel (Eq. 12). 
\item \textbf{Model Size.} As a result, the model size of  (linear) SIGN is proportional to the number of scales $r$, while the model size of DGC is independent of $T$ and $K$.
\end{itemize}
Overall, we can see that the two are closely related. Below, we further compare DGC with SIGN in terms of both their performance as well as their computational efficiency.

\subsection{Fine-grained Performance Comparison}

To take a closer look at the difference between the two methods, we compare the two methods with the same terminal time $T$. 

\textbf{Setup.} We conduct the experiments on the Cora dataset (semi-supervised). We re-produce SIGN  as it has not reported results on these datasets. For comparison, we follow the same protocol of SGC, using the same optimizer, learning rate, training epochs; and (automatically) tune the weight decay and propagation steps ($K$ or $r$) at each terminal time $T$. 
\begin{table}[h]\centering
\caption{Comparison of test accuracy (\%) with different time $T$ ranging from $1$ to $6$.}
\begin{tabular}{lccccccc}
\toprule
Methods    & 1    & 2    & 3    & 4    & 5    & 5.3  & 6    \\ \midrule
SGC  & 72.4 & 80.5 & 79.2 & 81.0 & 78.8 & N/A  & 80.5 \\
SIGN & 72.4 & 77.3 & 78.9 & 80.6 & 81.3 & N/A  & 81.7 \\
DGC  & 78.0 & 81.9 & 82.5 & 83.0 & 83.1 & 83.3 & 81.5 \\ \bottomrule
\end{tabular}
\label{tab:sign-T}
\end{table}

\textbf{Results.} As shown in Table \ref{tab:sign-T}, we have the following findings:
\begin{itemize}
\item DGC still outperforms SIGN for $T\leq 5$, while being slightly worse at $T=6$ due to over-smoothing;
\item DGC can flexibly choose a real-numbered terminal time, e.g., $T=5.3$, to find the best tradeoff between under-smoothing and over-smoothing ($83.3$ acc), while the terminal time of SIGN and SGC has to be an integer;
\item Single-stage methods (SGC \& DGC) have bigger advantages at earlier time, while SIGN can surpass SGC at later stages by aggregating multi-scale information.
\end{itemize}
The empirical results show that although useful, multi-scale techniques cannot fully solve the limitations of SGC, while DGC can overcome these limitations by decoupling $T$ and $K$.

\subsection{Computation Time}

Here, we further compare the explicit training time. We also include SIGN with a different number of scales ($r$) as a baseline. The experiments are conducted on the same platform.

\begin{table}[h]\centering
\caption{Comparison of training time on the Pubmed dataset.}
\begin{tabular}{lccc}
\toprule
Method          & Preprocessing Time & Training Time & Total Time \\ \midrule
SGC / DGC ($K=2$) & 3.8 ms             & 61.5 ms       & 65.3 ms    \\
SIGN ($r=2$)      & 5.9 ms             & 78.7 ms       & 84.6 ms    \\
DGC ($K=100$)     & 169.2 ms           & 55.8 ms       & 225.0 ms   \\
SIGN ($r=100$)    & 2.4 s              & 106.9 ms      & 2.6 s      \\
GCN             & 0                  & 17.0 s        & 17.0 s     \\ \bottomrule
\end{tabular}
\label{tab:sign-time}
\end{table}

We note that the comparison of DGC ($K=100$) v.s. SGC ($K=2$) is merely designed to show how the extra propagation steps do not contribute much to the total time. Remarkably, it does not mean that DGC takes 100 steps at all settings. Sometimes, a few steps ($K<10$) are enough to attain the best performance. From Table \ref{tab:sign-time}, we have the following findings: 1) both SIGN and DGC can be much faster than GCN by pre-processing the propagation kernels; 2) DGC is still faster than SIGN with the same propagation steps by being a single-scale method.

\begin{table}[h]\centering
\caption{Comparison of training time on the PubMed dataset.}
\begin{tabular}{lccccc}
\toprule
$K$ or $r$ & 2      & 10      & 20      & 50       & 100      \\ \midrule
DGC       & 3.8 ms & 17.5 ms & 34.9 ms & 86.5 ms  & 169.2 ms \\
SIGN      & 5.9 ms & 20.5 ms & 51.5 ms & 162.3 ms & 2.4 s   \\ \bottomrule
\end{tabular}
\label{tab:sign-time-K}
\end{table}

To better understand the difference in computation time between DGC and SIGN, we note that SIGN is a multi-scale method, and it stores every intermediate scale of $A$, i.e., $[A^1,A^2,\dots,A^r]$ in the preprocessing stage with a memory complexity $O(r)$. On the contrary, DGC is a single-scale method and only needs to store one single propagation matrix $S^{(K)}$, so its memory complexity is $O(1)$. 
In modern deep learning frameworks (PyTorch in our case), it takes time to keep expanding the working GPU memory due to the copy operation. As a result, when $r$ or $K$ is very large (like $100$), SIGN can be much more memory-intensive than SGC ($100\times$). This results in a large difference in the preprocessing time between SIGN and DGC, as shown above in Table \ref{tab:sign-time-K}. There might be some tricky ways to optimize the pipeline, but here we stick to the vanilla (also official) implementation for a fair comparison.

\end{document}